\documentclass[10pt,twocolumn,letterpaper]{article}

\usepackage{iccv}
\usepackage{times}
\usepackage{epsfig}
\usepackage{graphicx}
\usepackage{amsmath}
\usepackage{amssymb}
\usepackage{pifont}
\usepackage{booktabs}
\usepackage{multirow}
\usepackage{balance}
\usepackage{algorithm}
\usepackage{algorithmic}
\usepackage[utf8]{inputenc}
\usepackage[english]{babel}
\usepackage{amsthm}
\usepackage{color}
\newtheorem{theorem}{Theorem}
\newtheorem{lemma}[theorem]{Lemma}

\newtheorem{definition}[theorem]{Definition}
\newtheorem{corollary}[theorem]{Corollary}
\newcommand{\cmark}{\ding{51}}%
\newcommand{\xmark}{\ding{55}}%

\def\sub#1{_{\rm #1}}
\def\vct#1{\mbox{\boldmath $#1$}}
\def\eg{{\it e.g.}}

\def\ie{{\it i.e.}}

\def\ui#1{^{(#1)}}
\def\perm{\vct{\phi}}
\def\he#1{\vct{\zeta}\left(#1\right)}

\def\hes#1{\vct{\zeta}(#1)}
\def\dperm#1{\vct{\Phi}\left(#1\right)}

\def\nz#1{nz(#1)}
\def\barw{\bar{\vct{w}}}
\def\wui{\vct{w}\ui{n}}


\usepackage[breaklinks=true,bookmarks=false]{hyperref}

\iccvfinalcopy 


\ificcvfinal\pagestyle{empty}\fi
\begin{document}

\title{Privacy-Preserving Visual Learning Using \\Doubly Permuted Homomorphic Encryption}

\author{Ryo Yonetani\\
The University of Tokyo\\
Tokyo, Japan\\
{\tt\small yonetani@iis.u-tokyo.ac.jp}
\and
Vishnu Naresh Boddeti\\
Michigan State University\\
East Lansing, MI, USA\\
{\tt\small vishnu@msu.edu}
\and
Kris M. Kitani\\
Carnegie Mellon University\\
Pittsburgh, PA, USA\\
{\tt\small kkitani@cs.cmu.edu}
\and
Yoichi Sato\\
The University of Tokyo\\
Tokyo, Japan\\
{\tt\small ysato@iis.u-tokyo.ac.jp}
}

\maketitle
\thispagestyle{empty}

\begin{abstract}
We propose a privacy-preserving framework for learning visual classifiers by leveraging distributed private image data. This framework is designed to aggregate multiple classifiers updated locally using private data and to ensure that no private information about the data is exposed during and after its learning procedure. We utilize a homomorphic cryptosystem that can aggregate the local classifiers while they are encrypted and thus kept secret. To overcome the high computational cost of homomorphic encryption of high-dimensional classifiers, we (1) impose sparsity constraints on local classifier updates and (2) propose a novel efficient encryption scheme named doubly-permuted homomorphic encryption (DPHE) which is tailored to sparse high-dimensional data. DPHE (i) decomposes sparse data into its constituent non-zero values and their corresponding support indices, (ii) applies homomorphic encryption only to the non-zero values, and (iii) employs double permutations on the support indices to make them secret. Our experimental evaluation on several public datasets shows that the proposed approach achieves comparable performance against state-of-the-art visual recognition methods while preserving privacy and significantly outperforms other privacy-preserving methods.
\end{abstract}

\section{Introduction}
\begin{figure}[t]
\centering
\includegraphics[width=\linewidth]{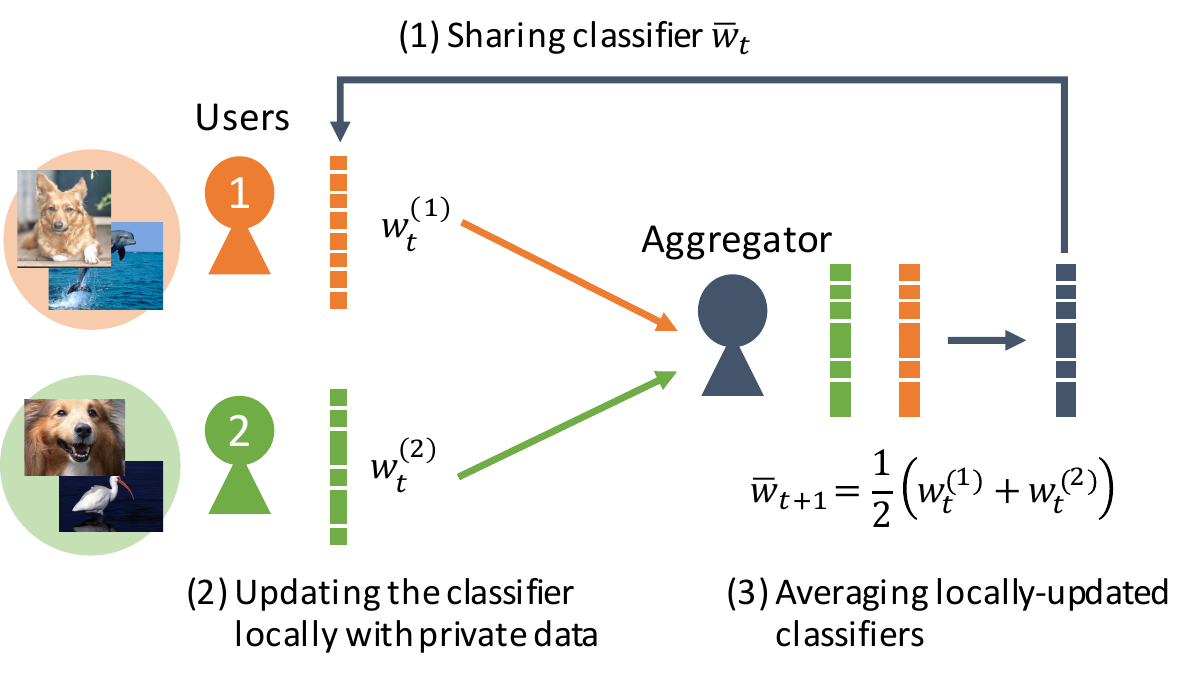}
\caption{{\bf Learning a Classifier by an Aggregator and Users}. (1) Given classifier $\bar{\vct{w}}_t$ shared by an aggregator (\eg, the administrator of a cloud storage); (2) users update it locally using private data and send updated classifier $\vct{w}\ui{1}_t, \vct{w}\ui{2}_t$ to the aggregator; and (3) the aggregator collects and averages the locally-updated classifiers to obtain a better one: $\bar{\vct{w}}_{t+1} = \frac{1}{2}(\vct{w}\ui{1}_t + \vct{w}\ui{2}_t)$. How can users keep $\vct{w}\ui{1}_t, \vct{w}\ui{2}_t$ secret from the aggregator and other users?}
\label{fig:teaser}
\end{figure}

An enormous amount of photos and videos are captured everyday thanks to the proliferation of camera technology. Such data has played a pivotal role in the progress of computer vision algorithms and the development of computer vision applications. For instance, a collection of photos uploaded to Flickr has enabled large-scale 3D reconstruction \cite{Agarwal2011a} and city attribute analysis \cite{Zhou2014a}. More recently, a large-scale YouTube video dataset has been released to drive the development of the next phase of new computer vision algorithms \cite{Sami2016}. Arguably, much of the progress made in computer vision in the last decade has been driven by data that has been shared publicly, typically after careful curation.

Interestingly, we do not always share captured data publicly but just store it on a personal storage~\cite{Hoyle2014}, and such private data contains various moments of everyday life that are not found in publicly shared data~\cite{Ahern2007,Chowdhury2016,Hoyle2015,Kairam2016}. Accessing extensive amounts of private data could lead to tremendous advances in computer vision and AI technology. However there is a dilemma. On one hand, leveraging this diverse and abundant source of data could significantly enhance visual recognition capabilities and enable novel applications. On the other hand, using private data comes with its own challenge: \emph{how can we leverage distributed private data sources while preserving the privacy of the data and its owners?}

To address this challenge, we develop a privacy preserving framework that can learn a visual classifier from distributed private data. As illustrated in Figure~\ref{fig:teaser}, we consider the following two types of parties: users who own private data and an aggregator who wishes to make use of them. They collaboratively learn a classifier as follows: (1) given a classifier shared by the aggregator; (2) users update it locally with their own data; and (3) the aggregator collects and averages the locally-updated classifiers to obtain a better one. These steps iterate multiple times until the aggregator and the users get a high-performance classifier which has been trained on a large collection of private data. As a practical example, suppose that people capture life-logging videos passively with a wearable camera and store them on a cloud storage. By allowing the cloud administrator to curate these videos, he will be able to learn activity classifiers~\cite{Fan2016a,Fathi2012,Lee2015,Pirsiavash2012,Ryoo2013,Singh2016} from realistic and diverse data. However, because such videos inevitably include any private moment that came into the view of cameras, the administrator should ensure that no private information in the videos is leaked during and after the learning procedure.

One possible solution for preserving privacy is to perturb classifier weights by adding noise \cite{Pathak2010a,Rajkumar2012a}. Users could also sanitize their data by detecting and removing sensitive images \cite{Fan2016a,Korayem2016a,Templeman2014a} or transforming images into low-resolution ones \cite{Ryoo2017a}. We stress here that these approaches however involve strong trade-off between data degradation and data utility -- the more information we hide, the less effective the data is for learning classifiers. 

An attractive solution for privacy-preserving learning without such trade-off is the use of \emph{homomorphic cryptosystems} --- a class of cryptosystems that can allow for basic arithmetic operations over encrypted data \cite{Fontaine2007a}. For example, the Paillier cryptosystem~\cite{Paillier1999a} calculates the encrypted sum of data as follows: $\he{x\ui{1} + x\ui{2}}=\he{x\ui{1}}\he{x\ui{2}}$ where $\he{x}$ is the ciphertext of value $x$ and \emph{never reveals the original $x$} without a decryption key. This unique property of homomorphic cryptosystems has enabled a variety of privacy-preserving machine learning and data mining techniques (\eg, \cite{Bost2015a,Dowlin2016a,Graepel2012,Pathak2010a,Takabi2016a,Yuan2014a}). By homomorphically encrypting locally-updated classifiers, the aggregator can average them while ensuring that the classifiers never expose private information about the trained data.

However, homomorphic encryption involves prohibitively high computational cost rendering it unsuitable for many tasks that need to learn high-dimensional weights. For example, in our experiments, we aim to learn a classifier for recognizing 101 objects in the Caltech101 dataset \cite{Fei-Fei2007a} using 2048-dimensional deep features. Paillier encryption with a 1024-bit key takes about 3 ms for a single weight using a modern CPU\footnote{A single CPU of a MacBook Pro with a 2.9GHz Intel Core i7 was used with python-paillier (\url{https://github.com/n1analytics/python-paillier}) and gmpy2 (\url{https://pypi.python.org/pypi/gmpy2}; a C-coded multiple-precision arithmetic module.)}. Therefore, encryption of each classifier will require more than 10 minutes.

To leverage homomorphic encryption for our privacy-preserving framework, we present a novel encryption scheme named \emph{doubly-permuted homomorphic encryption (DPHE)}, which allows high-dimensional classifiers to be updated securely and efficiently. The key observation is that, if we enforce sparsity constraints on the classifier updates, the updated weights can be decomposed into a small number of non-zero values and corresponding support indices. DPHE then applies homomorphic encryption only to non-zero values. Because the support indices could also be private information (\eg, when classifier weights take binary values \cite{Cheng2014a,Courbariaux2015a,Hare2012a}), we design a shuffling algorithm of the indices based on two different permutation matrices. This shuffling ensures that 1) only the data owners can identify the original indices and 2) an aggregator can still average classifiers while they are encrypted and shuffled. By adopting a sparsity constraint of more than 90\%, DPHE reduces the classifier encryption time of the previous example on Caltech101 to about one minute.

We evaluate our visual learning framework on a variety of tasks ranging from object classification on the classic Caltech101/256 datasets~\cite{Fei-Fei2007a,Griffin2007a} to more practical and sensitive tasks with a large-scale dataset including face attribute recognition on the CelebA dataset~\cite{Liu2015a} and sensitive place detection on the Life-logging dataset~\cite{Fan2016a}. Experimental results demonstrate that our framework performs significantly better than several existing privacy-preserving methods~\cite{Pathak2010a,Rajkumar2012a} in terms of classification accuracy. We also achieve comparable classification performance to some state-of-the-art visual recognition methods~\cite{He2015a,Liu2015a,Zeiler2014a}.

\subsection*{Related Work}
Privacy preservation has been actively studied in several areas including cryptography, statistics, machine learning, and data mining. One long-studied topic in cryptography is secure multiparty computation~\cite{Yao1986a}, which aims at computing some statistics (\eg, sum, maximum) securely over private data owned by multiple parties. More practical tasks include privacy-preserving data mining~\cite{Aggarwal2008a,Agrawal2000}, ubiquitous computing~\cite{Krumm2009a}, and social network analysis~\cite{Zhou2008a}.

One popular technique for preserving privacy is by perturbing outputs or intermediate results of algorithms based on the theory of differential privacy (DP)~\cite{Dwork2006a,Dwork2008a}. DP was classically introduced ``to release statistical information without compromising the privacy of the individual respondents~\cite{Dwork2008a},'' where the individual respondents are samples in a database and the statistical information is a certain statistic (e.g., average) of those samples. This can be done by adding properly-calibrated random noise to the statistical information such that one cannot distinguish the presence of an arbitrary single sample in the database. DP was then adapted and used also in the context of privacy-preserving machine learning, \eg, \cite{Chaudhuri2008a,Chaudhuri2011a,Chu2016a,Pathak2010a,Rajkumar2012a,Shokri2015a}. In the classification cases, a training dataset and classifier weights are referred to as a database and its statistical information, respectively. By adding properly-calibrated random noise to the classifier weights (a.k.a. output perturbation~\cite{Chaudhuri2011a}) or objective functions (objective perturbation), DP-based classification methods aim to prevent the classifier from leaking the presence of individual samples in the training dataset. While these methods are computationally efficient, the perturbed results are not the exact same as what could be originally learned from the given data. Moreover, the scale of noise typically increases exponentially to the level of privacy preservation (\eg, \cite{Dwork2006}). Perfect privacy preservation can never be achieved as long as one wishes to get some meaningful results from data. 

Another privacy-preserving technique is the use of homomorphic cryptosystems~\cite{OdedGoldreich2004}, which we will study in this work. The homomorphic cryptosystems also enable a variety of privacy-preserving machine learning and data mining~\cite{Bost2015a,Dowlin2016a,Graepel2012,Lindell2009a,Pathak2010a,Pinkas2002a,Takabi2016a,Vaidya2003a,Yuan2014a} because they can make some weights perfectly secret by encrypting them. Unlike the DP-based approaches, homomorphic encryption does not compromise the accuracy of algorithm outputs at the expense of its high computational cost. Our key contribution is to resolve this computational cost problem by introducing a new efficient encryption scheme.

Finally, studies on privacy preservation in computer vision are still limited. Recent work includes sensitive place detection~\cite{Fan2016a,Korayem2016a,Templeman2014a} and privacy-preserving activity recognition~\cite{Ryoo2017a}. These methods are designed to sanitize private information in a dataset and involve the potential trade-off between data security and utility. Another relevant topic is privacy-preserving face retrieval based on homomorphic encryption~\cite{Erkin2009a,Sadeghi2010a}. Because these methods encrypt all data samples, they can be applied only to thousands of data, while our approach can accept hundreds of thousands of data as input based on a distributed learning framework.

\section{Efficient Encryption Scheme for Privacy-Preserving Learning of Classifiers}
\label{sec:secsum}

The goal of this work is privacy-preserving learning of visual classifiers over distributed image data privately owned by people. As described earlier, our framework involves two types of parties: {\bf users} $\{U\ui{n}\mid n=1,\dots,N\}$ who individually own labeled image data and an {\bf aggregator} who exploits the data for learning classifiers while ensuring that no privacy information about the data is leaked during and after the learning procedure.

\subsection{Privacy-Preserving Learning Framework}
\label{subsec:framework}
We first describe how classifiers can be learned from distributed data. Let us denote classifier weights at step $t$ by $D$-dimensional vector $\bar{\vct{w}}_t\in\mathbb{R}^{D}$. We assume that the initial weight vector $\bar{\vct{w}}_0$ is provided by an aggregator, for example by using some public data already published on the web. As shown in Figure~\ref{fig:teaser}, (1) weight vector $\bar{\vct{w}}_t$ is first shared from an aggregator to users. (2) User $U\ui{n}$ updates $\bar{\vct{w}}_{t}$ locally with own labeled data and sends updated classifier $\vct{w}\ui{n}_t$ to the aggregator, and (3) the aggregator averages locally-updated classifiers for the initial weights of the next step: $\bar{\vct{w}}_{t+1} = \frac{1}{N}\sum_n \vct{w}\ui{n}_t$.

This framework, however, is vulnerable to potential privacy leakage at multiple places. An aggregator can have access to $\{\vct{w}\ui{n}_t\mid n=1,\dots, N\}$ each of which reflects user's private data. In addition, if a communication path between user and aggregator sides is not secure, $\vct{w}\ui{n}_t$ could also be intercepted by another user via man-in-the-middle attacks. If $\vct{w}\ui{n}_t$ is updated via stochastic gradient descent (SGD), the difference of two classifier weights $\vct{w}\ui{n}_t - \bar{\vct{w}}_{t}$ could be used to identify a part of the trained private data. Some concrete examples of how private data can be leaked are shown in our supplementary material.

Our new encryption scheme, the doubly-permuted homomorphic encryption (DPHE), is designed to prevent privacy leakage by homomorphically encrypting $\vct{w}\ui{n}_t$. By imposing sparse constraints on local classifier updates (\eg, \cite{Friedman2012,Tibshirani1994,Zou2005a}), the DPHE exploits the sparsity to encrypt high-dimensional classifiers efficiently. In Section~\ref{subsec:he}, we first explain briefly an algorithm to average local classifiers securely based on the Paillier cryptosystem~\cite{Paillier1999a}. Then Section~\ref{subsec:dperm} and Section~\ref{subsec:dperm2} present how DPHE `doubly-permutes' sparse data on a step-by-step basis. For simplicity, we will focus on one particular step by omitting time index $t$, that is, we consider the problem of computing $\bar{\vct{w}}=\frac{1}{N}\sum_n \vct{w}\ui{n}$ securely. 

\subsection{Secure-Sum with Paillier Encryption}
\label{subsec:he}

\begin{figure}[t]
\centering
\includegraphics[width=\linewidth]{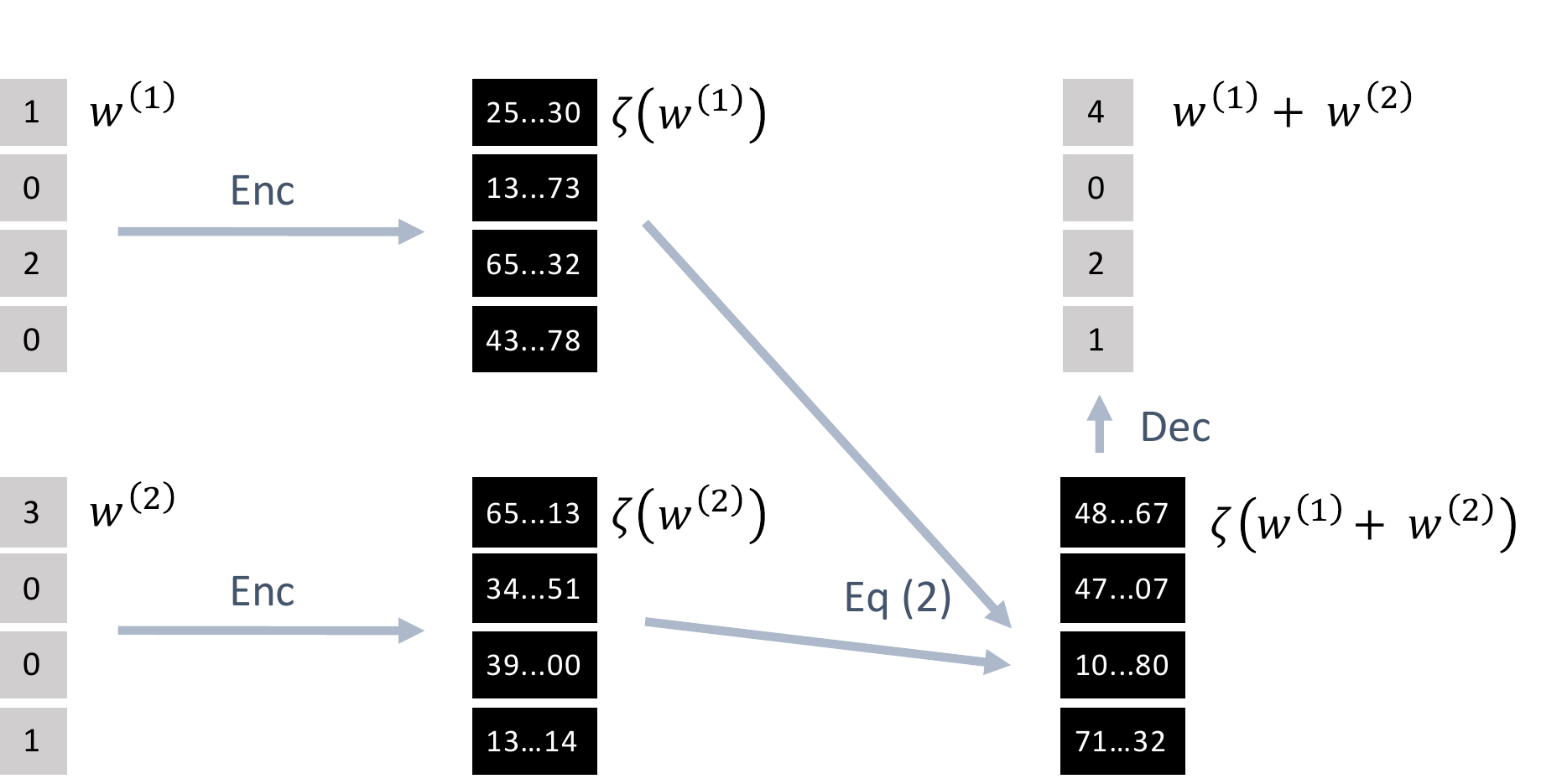}
\caption{{\bf Secure-Sum with Paillier Encryption}. Vectors $\vct{w}\ui{1}, \vct{w}\ui{2}$ are first encrypted with the Paillier cryptosystem. Ciphertexts $\hes{\vct{w}\ui{1}}, \hes{\vct{w}\ui{2}}$ are then used to compute encrypted sum $\hes{\vct{w}\ui{1} + \vct{w}\ui{2}}$. After the decryption we obtain $\vct{w}\ui{1} + \vct{w}\ui{2}$.}
\label{fig:enc1}
\end{figure}

Let us start from a simple secure-sum algorithm where each of $N$ users $\{U\ui{n}\mid n=1,\dots,N\}$ sends a single value $w\ui{n}\in\mathbb{R}$ to an aggregator, and the aggregator computes their sum $w\sub{sum}=\sum_n w\ui{n}$, provided that $w\ui{n}$ is never exposed to any other party but user $U\ui{n}$. To accomplish this, we utilize the Paillier cryptosystem~\cite{Paillier1999a}. Let us denote by $\he{w}$, the ciphertext of value $w\in\mathbb{R}$\footnote{Although the Paillier cryptosystem originally works only on a positive integer, we can deal with real values $w\in\mathbb{R}$ by scaling them to be a positive integer before encryption and rescaling them after decryption properly.}. As shown in \cite{Paillier1999a}, $\he{w}$ cannot be inverted to the original value $w$ without a decryption key. Also, because the encryption always involves generation of random numbers, multiple encryptions of the same values will result in different ciphertexts.

To enable this secure-sum algorithm, we introduce a {\bf key generator} who issues encryption and decryption keys and distributes the only encryption key to all of the parties. User $U\ui{n}$ then sends to the aggregator $\he{w\ui{n}}$ that is encrypted with the given encryption key. Using the Paillier encryption, the product of two ciphertexts results in a ciphertext of the two original plaintexts, \ie, $\he{w\ui{1}}\he{w\ui{2}}= \he{w\ui{1}+w\ui{2}}$\footnote{Specifically, Paillier encryption is defined as $\he{x}=g^xr^T {\rm mod}\; T^2$ where the pair of values $(g, T)$ is an encryption key and $r$ is a random number generated per encryption. Given $\he{x\ui{1}}=g^{x\ui{1}}(r\ui{1})^T {\rm mod}\; T^2$ and $\he{x\ui{2}}=g^{x\ui{2}}(r\ui{2})^T {\rm mod}\; T^2$, the encrypted sum is derived as follows: $\he{x\ui{1}}\he{x\ui{2}} = g^{x\ui{1}+x\ui{2}}(r\ui{1}r\ui{2})^T {\rm mod}\; T^2=\he{x\ui{1}+x\ui{2}}$. Further details can be found in \cite{Fontaine2007a, Paillier1999a}.}. Therefore, the aggregator can compute the encrypted sum of values as follows:
\begin{equation}
\he{w\sub{sum}} = \he{\sum_n w\ui{n}} = \prod_n\he{w\ui{n}}.
\label{eq:securesum}
\end{equation}
Finally, the aggregator asks the key generator for decrypting $\he{w\sub{sum}}$ and receives $w\sub{sum}$. As long as all the parties strictly follow this algorithm, privacy of $U\ui{n}$ can be preserved by restricting the aggregator's access to $w\ui{n}$.

The straightforward extension of this secure-sum algorithm for averaging locally-updated classifiers, $\vct{w}\ui{n}\in\mathbb{R}^D$, is to apply the Paillier encryption element-wise. In what follows, let $\vct{w}\sub{sum}=\bar{\vct{w}}N\in\mathbb{R}^D$ and let $\he{\vct{w}}$ be a $D$-dimensional vector which elements are individually Paillier encrypted. Similar to Equation~(\ref{eq:securesum}), $\vct{w}\sub{sum}$ can be computed securely as follows (see Figure~\ref{fig:enc1} for a two-vector case):
\begin{equation}
\he{\vct{w}\sub{sum}}=\he{\sum_n \vct{w}\ui{n}}=\odot_n\he{\vct{w}\ui{n}},
\label{eq:vecsum}
\end{equation}
where $\odot_n$ is the element-wise product of multiple vectors over index $n$. Then, the aggregator receives $\vct{w}\sub{sum}$ with help from the key generator and computes $\bar{\vct{w}} = \frac{1}{N}\vct{w}\sub{sum}$.

\subsection{Encryption with Single Permutation}
\label{subsec:dperm}
\begin{figure}[t]
\centering
\includegraphics[width=\linewidth]{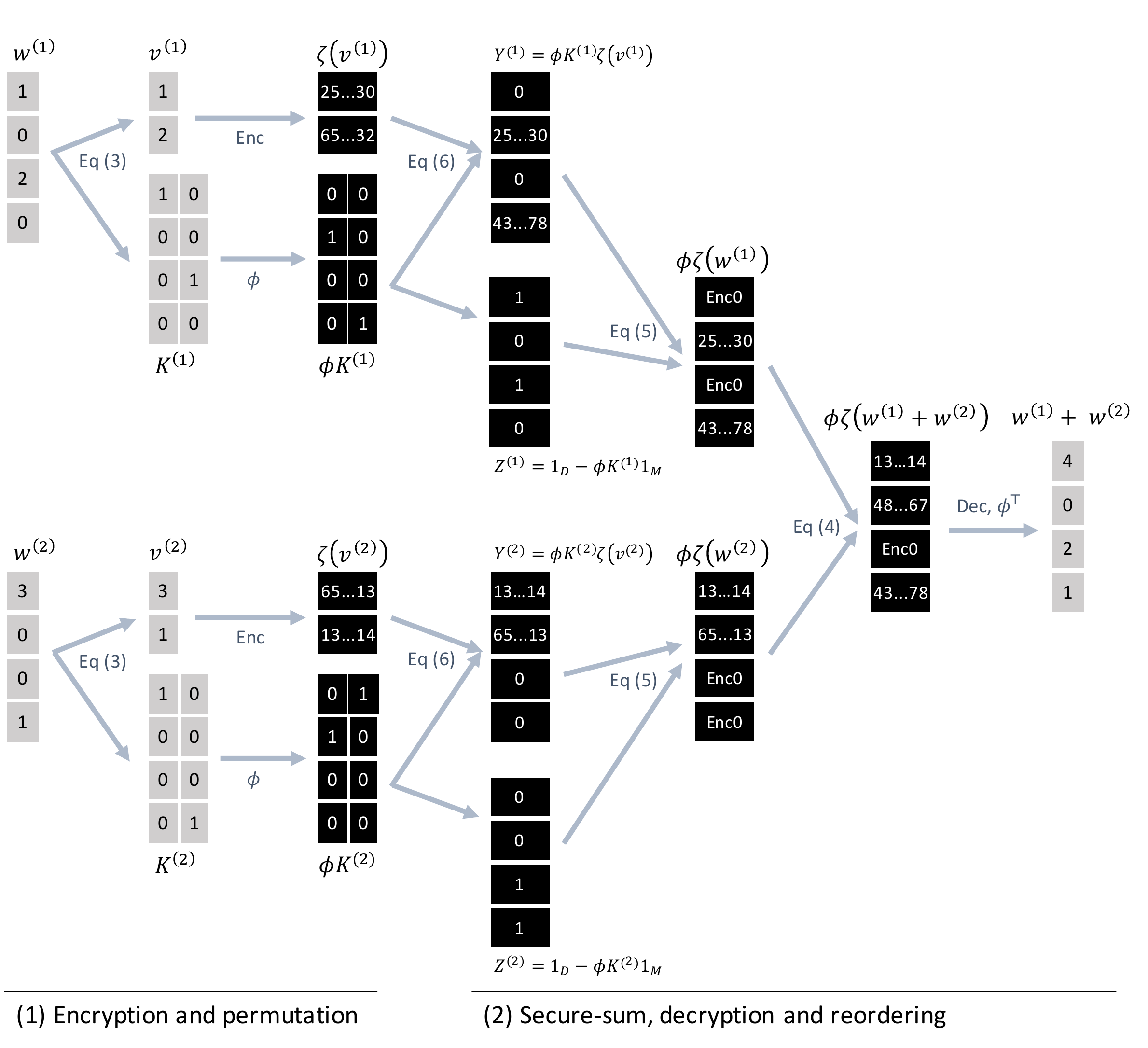}
\caption{{\bf Secure-sum with Efficient Encryption with Permutations}. (1) While applying the Paillier encryption only to $\vct{v}\ui{n}$, we shuffle index matrix $K\ui{n}$ with permutation matrix $\perm$ to keep it secure. (2) An aggregator can compute encrypted and shuffled sum $\phi\hes{\vct{w}\ui{1} + \vct{w}\ui{2}}$ and can ask a key generator for decryption and reordering to obtain $\vct{w}\ui{1} + \vct{w}\ui{2}$.}
\label{fig:sperm}
\end{figure}

The extension based on Equation~(\ref{eq:vecsum}) would, however, become infeasible for learning high-dimensional classifiers due to the high computational cost of homomorphic encryption. DPHE can overcome this problem by applying the homomorphic encryption only to a limited part of classifier weights that are updated with sparse constraints. 

Let us first introduce \emph{encryption capacity} $M$ that limits the number of values that are Paillier encrypted. We also denote by $\nz{\vct{w}\ui{n}}$, the actual number of non-zeros in $\vct{w}\ui{n}$. By choosing $M$ such that $\nz{\vct{w}\ui{n}}\leq M \leq D$, $\vct{w}\ui{n}$ can be decomposed into a pair of real-value vector $\vct{v}\ui{n}\in\mathbb{R}^{M}$ and binary matrix $K\ui{n}\in\{0, 1\}^{D\times {M}}$:
\begin{equation}
\vct{w}\ui{n} = K\ui{n}\vct{v}\ui{n},
\label{eq:decomposition}
\end{equation}
where $\vct{v}\ui{n}$ includes all the constituent non-zero values in $\vct{w}\ui{n}$. $K\ui{n}$ is an index matrix where the number of ones for each column is exact one and that for each row is at most one, and the $m$-th column of $K\ui{n}$ indicates the index of the $m$-th value of $\vct{v}\ui{n}$ in vector $\vct{w}\ui{n}$. Users encrypt the only $\vct{v}\ui{n}$ with the Paillier cryptosystem to obtain $\he{\vct{v}\ui{n}}$. If $\nz{\vct{w}\ui{n}}\ll D$, \ie, $\vct{w}\ui{n}$ is sparse, we can make this encryption more efficient by choosing smaller $M$. Note that there are multiple decompositions that satisfy Equation~(\ref{eq:decomposition}), and users can arbitrarily choose one of them. 

Now, the remaining problem is how to keep $K\ui{n}$ secret. This matrix indicates non-zero indices of $\vct{w}\ui{n}$ and could also be private information for users. For instance, when classifier weights take binary values (\eg, \cite{Cheng2014a,Courbariaux2015a,Hare2012a}), all non-zero values in $\vct{w}\ui{n}$ become one and the private information is found in $K\ui{n}$. To solve this problem, we propose shuffling $K\ui{n}$ with permutation matrix $\perm\in\{0, 1\}^{D\times D}$, where the number of ones for each row and for each column is exact one. This permutation matrix is 1) generated randomly by a key generator and 2) shared only among users, which we therefore refer to as \emph{user-shared (U-S) permutation matrix}. As illustrated in Figure~\ref{fig:sperm} (1), each user sends $\perm K\ui{n}$ to an aggregator. The aggregator cannot identify $K\ui{n}$ due to the absence of $\perm$ but is still able to compute the encrypted and shuffled sum of $\vct{w}\ui{n}$, namely $\perm\he{\vct{w}\sub{sum}}$, as follows (see also Figure~\ref{fig:sperm} (2)):
\begin{equation}
\perm\he{\vct{w}\sub{sum}}= \perm\he{\sum_n\vct{w}\ui{n}}=\odot_n \perm\he{\vct{w}\ui{n}}.
\label{eq:vecsum2}
\end{equation}
Here, $\perm\he{\vct{w}\ui{n}}$ is constructed as:
\begin{equation}
\perm\he{\vct{w}\ui{n}} = Y\ui{n} + {\rm Enc0}\cdot Z\ui{n},
\label{eq:hex}
\end{equation}
\begin{equation}
Y\ui{n}= \perm K\ui{n}\he{\vct{v}\ui{n}}, \;\; Z\ui{n}=\vct{1}_D - \perm K\ui{n}\vct{1}_M,
\label{eq:ViZi}
\end{equation}
where $\vct{1}_D, \vct{1}_M$ are all-one vectors of length $D$ and $M$. ${\rm Enc0}$ is a ciphertext of a zero computed \emph{only once} by the aggregator. Vector $Z\ui{n}$ adds ${\rm Enc0}$ to the locations of zeros in $Y\ui{n}$, which is necessary to compute $\perm\he{\vct{w}\sub{sum}}$ based on Equation~(\ref{eq:vecsum}).
Finally, the aggregator asks the key generator for reordering and decrypting $\perm\he{\vct{w}\sub{sum}}$ to obtain $\vct{w}\sub{sum}$.

\paragraph{Choosing encryption capacity $M$}
The most efficient approach to choose $M$ is by letting each user adjust $M$ independently such that $M\ui{n}=\nz{\vct{w}\ui{n}}$. However, this approach will reveal $\nz{\vct{w}\ui{n}}$ to an aggregator, which is less critical but still private information. Alternatively, one can keep $\nz{\vct{w}\ui{n}}$ secret as follows: First, the key generator determines $M$ such that $M \ll D$. If $\nz{\vct{w}\ui{n}}\leq M$ holds, a user encrypts $\vct{w}\ui{n}$ as presented above. Otherwise, the user arbitrarily splits $\vct{w}\ui{n}$ into several shards $\vct{w}\ui{n, 1},\dots,\vct{w}\ui{n, F}$ such that $\sum_f \vct{w}\ui{n, f} = \vct{w}\ui{n}$ and $\max_f\{\nz{\vct{w}\ui{n, f}}\mid f=1,\dots,F\}\leq M$. The user then encrypts and sends $\vct{w}\ui{n, f}$ one-by-one, and the aggregator receives them as if they are individual data. This modification does not affect output $\vct{w}\sub{sum}$ and is still efficient than naive encryption of all values as long as $\nz{\vct{w}\ui{n}} \ll D$.

\subsection{Encryption with Double Permutations}
\label{subsec:dperm2}

The U-S permutation matrix $\perm$ ensures that $K\ui{n}$ is secure against an aggregator. However, because $\perm$ is shared among all users, $K\ui{n}$ could be identified by a malicious user who intercepts $\perm K\ui{n}$ via man-in-the-middle attacks.

To address this issue, DPHE involves another set of permutation matrices to shuffle $K\ui{n}$: \emph{user-aggregator (U-A) permutation matrices} $\{\perm\ui{n}\in\{0, 1\}^{D\times D}\mid n= 1,\dots,N\}$ where $\perm\ui{n}$ is a permutation matrix defined in the same way as $\perm$ but \emph{is shared only between the user $U\ui{n}$ and the aggregator}. Both of the U-S and U-A permutation matrices are generated by the key generator and distributed properly. Then, each user \emph{doubly-permutes} $K\ui{n}$ as follows: 
\begin{equation}
\dperm{K\ui{n}} = \perm\ui{n}\perm K\ui{n}.
\label{eq:partial_reorder}
\end{equation}
Because the reordering of $\dperm{K\ui{n}}$ requires both $\perm$ and $\perm\ui{n}$, DPHE can now prevent an aggregator as well as any other user than $U\ui{n}$ from identifying $K\ui{n}$.

Importantly, because the aggregator knows all U-A permutation matrices $\{\perm\ui{n}\in\{0, 1\}^{D\times D}\mid n= 1,\dots,N\}$, $\dperm{K\ui{n}}$ can be reordered partially to obtain $\perm K\ui{n}$: 
\begin{equation}
\perm K\ui{n} = (\perm\ui{n})^\top \dperm{K\ui{n}}.
\end{equation}
Note that $(\perm\ui{n})^\top=(\perm\ui{n})^{-1}$. This allows the aggregator to compute $\perm\he{\vct{w}\sub{sum}}$ based on Equation~(\ref{eq:vecsum2}) by replacing $\perm K\ui{n}$ in $Y\ui{n}$ and $Z\ui{n}$ in Equation~(\ref{eq:ViZi}) with $(\perm\ui{n})^\top \dperm{K\ui{n}}$.
Algorithm~\ref{protocol1} summarizes how DPHE can average local classifiers securely and efficiently.

\begin{algorithm}[t]
\caption{Averaging Classifiers Securely with DPHE}
\label{protocol1}
\begin{algorithmic}[1]
\REQUIRE $N$ users $\{U\ui{n}\mid n=1,\dots,N\}$ who privately own $D$-dimensional classifier weights $\vct{w}\ui{n}$, aggregator $A$, key generator $G$, and encryption capacity $M$.
\ENSURE Averaged classifier $\bar{\vct{w}}=\frac{1}{N}\sum_n\vct{w}\ui{n}$
\STATE $G$ generates encryption key $\he{\cdot}$ and the corresponding decryption key, U-S permutation matrix $\perm$ and U-A permutation matrices $\{\perm\ui{n}\mid n=1,\dots,N\}$.
\STATE $G$ distributes $\he{\cdot},\perm,\perm\ui{n}, M$ to $U\ui{n}$ and $\he{\cdot},\{\perm\ui{n}\mid n=1,\dots,N\}$ to $C$.
\STATE $U\ui{n}$ decomposes $\vct{w}\ui{n}$ into $\vct{v}\ui{n}$ and $K\ui{n}$. 
\STATE $U\ui{n}$ sends $\he{\vct{v}\ui{n}}, \dperm{K\ui{n}}$ to $A$.
\STATE $A$ computes $\perm\he{\vct{w}\ui{n}}$ for each $n$ to get $\perm\he{\vct{w}\sub{sum}}$. 
\STATE $A$ asks $G$ for decrypting and reordering $\perm\he{\vct{w}\sub{sum}}$ to receive $\vct{w}\sub{sum}$, and computes $\bar{\vct{w}}=\frac{1}{N}\vct{w}\sub{sum}$.
\end{algorithmic}
\end{algorithm}

\subsection{Security Evaluation}
\label{sec:proof}
This section briefly describes that DPHE is guaranteed to be secure under certain assumptions. A more formal evaluation is present in our supplementary material.

\paragraph{Assumptions}
Recall that our framework involves users, an aggregator, and a key generator. Our security evaluation is built upon one of the classical assumptions in cryptography that they are all \emph{semi-honest} --- each party ``follows the protocol properly with the exception that it keeps a record of all its intermediate computations''~\cite{OdedGoldreich2004}. For instance, the aggregator is not allowed to ask the key generator to decrypt individual classifier $\perm\he{\vct{w}\ui{n}}$ while he may use $\bar{\vct{w}}$ to identify $\vct{w}\ui{n}$. We also assume that there is no collusion among the parties. For example, we will not consider cases where the aggregator and the key generator collude to share a Paillier decryption key and where the aggregator and all but one users collude to collect $\{\vct{w}\ui{j}\mid j\neq n\}$ to reveal $\vct{w}\ui{n}$. These assumptions are justified in some practical applications such as crowdsourcing~\cite{Kajino2014a} and multiparty machine learning~\cite{Pathak2010a}. Finally, as described in Section~\ref{subsec:framework}, we consider a case where malicious users may intercept data sent from other users to the aggregator by slightly abusing the semi-honesty assumption.

\paragraph{Security on Algorithm~\ref{protocol1}}
With the assumptions above, we can guarantee that \emph{no one but user $U\ui{n}$ can identify $\vct{w}\ui{n}$ and its non-zero indices $K\ui{n}$ during and after running Algorithm~\ref{protocol1} if the number of users satisfies $N\geq 3$}. Specifically, non-zero weights of $\vct{w}\ui{n}$, $\vct{v}\ui{n}$, are encrypted with the Pailler cryptosystem, which prevents an aggregator and all users from identifying $\vct{v}\ui{n}$ from its ciphertext $\he{\vct{v}\ui{n}}$ as they do not have a decryption key. Also, they cannot identify $K\ui{n}$ from its doubly-permuted form $\dperm{K\ui{n}}$ due to the lack of either U-S permutation matrix $\perm$ or U-A matrix $\perm\ui{n}$. Although the key generator owns the decryption key and all of the permutation matrices, he does not access $\vct{v}\ui{n}$ and $K\ui{n}$ in the algorithm. Finally, $\bar{\vct{w}}$ cannot be used to identify $\vct{w}\ui{n}$ when $N\geq 3$. For example, when $N=3$, user $U\ui{1}$ gets $\bar{\vct{w}}=\frac{1}{3}(\vct{w}\ui{1} + \vct{w}\ui{2} + \vct{w}\ui{3})$. However, $U\ui{1}$ can only compute $3\bar{\vct{w}} - \vct{w}\ui{1} = \vct{w}\ui{2} + \vct{w}\ui{3}$, which still remains ambiguous to identify $\vct{w}\ui{2}$ and $\vct{w}\ui{3}$. Similarly, non-zero indices of $\bar{\vct{w}}$ is just the union of those of $\vct{w}\ui{n}$ and not useful to reveal $K\ui{n}$. This security on $\bar{\vct{w}}$ holds also for the aggregator and the key generator.



\paragraph{Limitations} The requirement of $N\geq 3$ implies that, if only one or two users are available at one time, the aggregator will never publish classifiers without revealing each user's updates. Moreover, we cannot prevent certain attacks using published classifiers to infer potentially-private data, \eg, using a face recognition model and its output to reconstruct face images specific to the output~\cite{Fredrikson2015}, although such attacks are not currently able to identify which users privately owned the reconstructed data in a distributed setting.

\section{Experiments}
\label{sec:exp}
In this section, we address several visual recognition tasks with our privacy-preserving learning framework based on DPHE. Specifically, we first evaluate DPHE empirically under various conditions systematically with object classification tasks on Caltech101~\cite{Fei-Fei2007a} and Caltech256~\cite{Griffin2007a} datasets. Then we tackle more practical and sensitive tasks: face attribute recognition on the CelebA dataset~\cite{Liu2015a} and sensitive place detection on the Life-logging dataset~\cite{Fan2016a}.

\subsection{Settings of Experiments}
\label{subsec:exp_setting}
Throughout our experiments, we learned a linear SVM via SGD. We employed the elastic net~\cite{Zou2005a}, \ie, the combination of L1 and L2 regularizations, to enforce sparsity on locally-updated classifiers. For a simulation purpose, multiple users, an aggregator, and a key generator were all implemented in a single computer and thus no transmissions among them were considered in the experiments.

As a baseline, we adopted the following two off-the-shelf privacy-preserving machine learning methods. Similar to our approach, these methods were designed to learn classifiers over distributed private data while ensuring that no private information about distributed data was leaked during and after the learning procedure.
\begin{description}
\item[PRR10~\cite{Pathak2010a}] Users first train $D$-dimensional linear classifiers locally using their own data. They then average the locally-trained classifiers followed by adding a $D$-dimensional Laplace noise vector to the output classifier based on differential privacy. Because the scale of noise depends on the size of local user data and thus involves private information, the noise vector is encrypted with the Paillier cryptosystem (\ie, this method needs $D$-times encryptions). An L2-regularized logistic regression classifier was trained by each user via SGD.
\item[RA12~\cite{Rajkumar2012a}] Similar to our approach, RA12 iteratively averages locally-updated classifiers to get a better classifier. Unlike our method and PRR10, each local classifier is updated in a gradient descent fashion while adding the combination of Laplace and Gaussian noises to its objective function to prevent learned classifiers from leaking private information based on differential privacy. We employed an L2-regularized linear SVM for each of local classifiers.
\end{description}
These methods involve hyper-parameters $\epsilon\in\mathbb{R}_+, \delta\in\mathbb{R}_+$ to control the strength of privacy preservation. We followed the original papers~\cite{Pathak2010a,Rajkumar2012a} and set to $\epsilon=0.2, \delta=0.05$, which were adjusted and justified in the papers to preserve privacy while keeping high recognition performances. 

We set $N=5$ for these two baselines and our method, that is, five users were assumed to participate in a task. Encryption capacity $M$ was set to $M=\lceil 0.1D\cdot{\it NC}\rceil$ , where $D$ was the dimension of learned classifiers and ${\it NC}$ was the number of classes. If each of locally-updated classifiers has more than 90\% sparsity, its weights can be encrypted and transmitted at one time (see Section~\ref{subsec:dperm}).

\subsection{Performance Analysis on Caltech101/256}
\label{subsec:exp_caltech}
We compared DPHE against the privacy-preserving baselines~\cite{Pathak2010a,Rajkumar2012a} as well as some of the state-of-the-art object classification methods~\cite{He2015a,Zeiler2014a} on Caltech101/256 object classification datasets. Our evaluation was based on the protocol shown in \cite{He2015a, Zeiler2014a} as follows. For Caltech101 dataset~\cite{Fei-Fei2007a}, we generated training and testing data by randomly picking 30 and no more than 50 images respectively for each category. On the other hand, for the Caltech256 dataset~\cite{Griffin2007a}, we chose random 60 images per category for training data and the rest of images for testing data. 

Recall that our learning framework requires some initialization data for obtaining the initial weights $\bar{\vct{w}}_0$. Therefore, we first 1) left 10\% of the training data for the initialization data and 2) split 90\% of them into five subsets of the same size, which we regarded as private image data of five users. For the two baselines~\cite{Pathak2010a,Rajkumar2012a}, we added 1) to each of 2) to serve as private data of each user. This ensured that in both of our method and the baselines each user had access to the same image set. We ran the evaluation ten times to get a mean and a standard deviation of classification accuracies averaged over all the categories.

For our method and the baselines, we extracted 2048-dimensional outputs of the global-average pooling layer of the deep residual network~\cite{He2016a} with 152 layers trained on ImageNet~\cite{Russakovsky2015a} as a feature, which was then normalized to have zero-mean and unit-variance for each dimension. In our method, the relative strength of L1 regularization in the elastic net was fixed to 0.5, and the overall strength of elastic-net regularization was adjusted adaptively so that local classifiers had more than 90\% sparsity on average. 

\paragraph{Comparisons to other methods}
\begin{table}[t]
\caption{{\bf Object Recognition on Caltech101/256}: mean and standard deviation of classification accuracies (\%) averaged over object categories. Note that the scores of HZRS14 and ZF13 were cited from their original papers~\cite{He2015a,Zeiler2014a}.}
\centering
\scalebox{.9}{
\begin{tabular}{cccc}
\toprule
     Methods &  Caltech101& Caltech256 & Privacy\\ 
     \midrule
     HZRS14~\cite{He2015a}& $93.4\pm 0.5$& N/A &\xmark\\
     ZF13~\cite{Zeiler2014a}& $85.4\pm 0.4$& 72.6$\pm 0.1$&\xmark\\
     PRR10~\cite{Pathak2010a}&$41.6\pm 1.2 $&$55.9\pm 0.5$&\cmark \\
     RA12~\cite{Rajkumar2012a}&$83.8\pm 1.1 $&$68.0\pm 0.3$&\cmark \\
     \midrule
     {\bf DPHE}& $89.3\pm 0.8$&$74.7\pm 0.4$ &\cmark \\
\bottomrule
\end{tabular}
}
\label{tab:caltech1}
\end{table}
Table~\ref{tab:caltech1} presents classification accuracies of all the methods. Our method performed almost comparably well to state-of-the-art methods~\cite{He2015a,Zeiler2014a} while preserving privacy. We also found that DPHE outperformed the privacy-preserving baselines~\cite{Pathak2010a,Rajkumar2012a}. We here emphasize that DPHE was designed to preserve privacy without adding any noise on locally-updated or averaged classifiers, which would be one main reason of our performance improvements over the two baselines. RA12~\cite{Rajkumar2012a} inevitably needs to add noise on local classifiers, and still does not preserve privacy perfectly due to the trade-off between data utility and security. On the other hand, PRR10~\cite{Pathak2010a} achieves better privacy preservation thanks to the combination of homomorphic encryption and perturbed classifiers. However, this baseline did not employ iterative updates of classifiers unlike other methods and thus could not effectively leverage distributed training data.

\paragraph{Encryption time}
\begin{table}[t]
\caption{{\bf Sparsity (\%), Classification Accuracy (\%), and Encryption Time (sec) of DPHE on Caltech101.}}
\centering
\scalebox{.9}{
\begin{tabular}{c|cc}
\toprule
     Sparsity  & Accuracy & Time \\
     \hline
     0.01 & 89.7 & 620 \\
     79.1 & 89.6 & 186 \\
     95.6 & 88.2 & 62 \\
\bottomrule
\end{tabular}
}
\label{tab:caltech2}
\end{table}
Table~\ref{tab:caltech2} depicts the relationship between the sparsity of locally-updated classifiers (averaged over all users and all iterations), classification accuracies, and encryption times of DPHE on Caltech101. We adjusted the sparsity by changing the relative strength of L1 regularization in the elastic net. Overall, the increase of sparsity little affected the classification performance. The comparison between the top and the bottom rows of the table indicates that we achieved ten times faster encryption at the cost of the only 1.5 percent points decrease of classification accuracies. Note that PRR10~\cite{Pathak2010a} requires to encrypt a 2048-dimensional dense noise vector for privacy preservation, which took about 620 sec in this condition.

\paragraph{Number of users}
\begin{figure}[t]
\centering
\includegraphics[width=\linewidth]{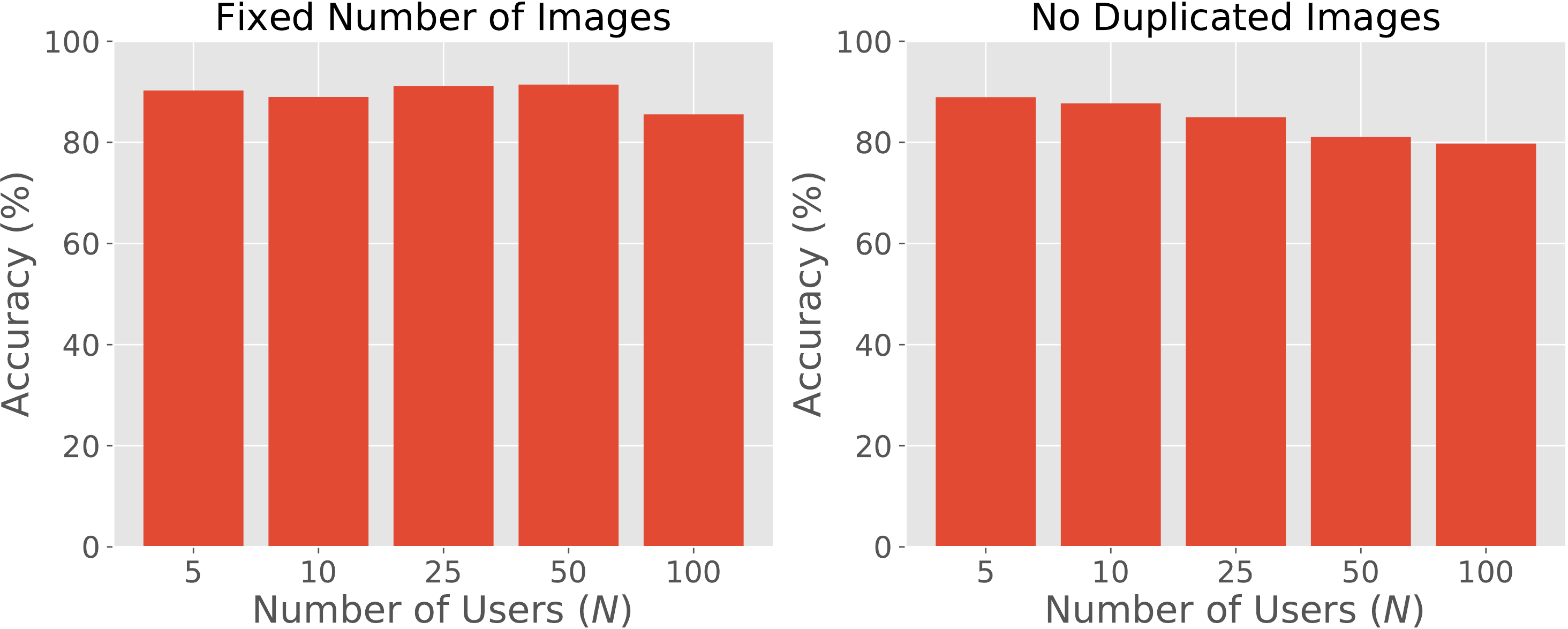}
\caption{{\bf Classification Accuracy (\%) with Different Numbers of Users on Caltech101.} Left: users own the fixed number of partially-duplicated images. Right: users own the decreasing number (inversely proportional to $N$) of non-duplicated images.}
\label{fig:N}
\end{figure}
We also investigate how the number of users $N$ affected classification performances on Caltech101 under two different conditions. In the left of Figure~\ref{fig:N}, we repeated multiple (1, 2, 5, 10, and 20) times the random split of training data into five subsets. This yielded $N=5, 10, 25, 50, 100$ subsets each of which has the fixed number of partially duplicated images. We found that the increase of user sizes hardly affected classification performances but slightly made it worse when $N=100$. On the right of the figure, we split the training data into $N=5, 10, 25, 50, 100$ subsets. While this split ensured that there were no duplicated images across multiple subsets, the number of images in each subset decreased as $N$ became larger. The result with this condition implies that DPHE requires each user to own a sufficient number of images to make each local update stable.

\subsection{Practical Examples}
\label{subsec:exp_practical}
As proof-of-concept applications, this section addresses two practical visual learning tasks where training images could be highly private and need privacy preservation: face attribute recognition and sensitive place detection. Note that the following experiments are practical also in terms of data size: the two datasets used in the experiments contained more than one hundred thousand images in total.

\paragraph{Face attribute recognition}
We first adopted a face attribute recognition task on the CelebFaces Attributes (CelebA) Dataset~\cite{Liu2015a} comprising 202,599 face images with 40 attribute labels (\eg, big lips, oval faces, young). As a feature, we used 512-dimensional outputs of the FC5 layer of the face recognition network~\cite{Wen2016b} trained on the CASIA WebFace~\cite{Yi2014}. All face images were cropped and aligned using provided facial landmarks. We used validation data (19,868 images) for initialization data and split training data (162,770 images) into five subsets to run our method with $N=5$. Similar to the previous experiments, we added the initialization data to each of the five subsets to serve as one of the five private data for privacy-preserving baselines~\cite{Pathak2010a,Rajkumar2012a}. The regularization strength of our method was adjusted so that the sparsity of local classifiers was about 75\% on average, which required about 14 seconds for encrypting each classifier. Table~\ref{tab:celeba} reports a recognition accuracy averaged over all of the 40 face attributes. DPHE worked comparably well to the state-of-the-art method~\cite{Liu2015a} and outperformed privacy-preserving baselines~\cite{Pathak2010a,Rajkumar2012a}.

\begin{table}[t]
\caption{{\bf Face Attribute Recognition on CelebA Dataset} : mean accuracies (\%) over 40 face attribute labels. The score of LLWT15 was cited from their original paper~\cite{Liu2015a}.}
\centering
\scalebox{.9}{
\begin{tabular}{ccc}
\toprule
    Methods & Accuracy & Privacy \\
     \midrule
     LLWT15~\cite{Liu2015a}& 87 & \xmark\\
     PRR10~\cite{Pathak2010a}& 78 & \cmark\\
     RA12~\cite{Rajkumar2012a}& 64 & \cmark\\
     \midrule
     {\bf DPHE} & 84 & \cmark\\
\bottomrule
\end{tabular}
}
\label{tab:celeba}
\end{table}

\paragraph{Sensitive place detection}
Another interesting application where privacy preservation plays an important role is sensitive place detection presented in~\cite{Fan2016a,Korayem2016a,Templeman2014a}. This task aims at detecting images of a place where sensitive information could be involved, such as those of bathrooms and laptop screens. We used the Life-logging dataset~\cite{Fan2016a} that contained 131,287 training images selected from the Microsoft COCO~\cite{Lin2014b} and Flickr8k~\cite{Hodosh2013} datasets, and 7,210 testing images recorded by a wearable life-logging camera. Following~\cite{Fan2016a}, images with specific annotations: `toilet,' `bathroom,' `locker,' `lavatory,' `washroom,' `computer,' `laptop,' `iphone,' `smartphone,' and `screen' were regarded as sensitive ones. We used a 4096-dimensional deep feature provided in the work~\cite{Fan2016a}. 132 images containing both sensitive and non-sensitive samples were chosen randomly for initialization data and the rest was split into five subsets to serve as private data. The regularization strength was chosen such that sparsity of local updates was more than 95\% on average, which required only 1.2 seconds for encryption. Table~\ref{tab:deepdiary} presents average precision scores for detecting sensitive places with our method and the two privacy-preserving baselines. Note that the method in the original paper~\cite{Fan2016a} could not be compared directly due to the absence of its average precision score and different experimental conditions (\ie, smaller numbers of training and testing images were used). We confirmed that DPHE performed fairly well compared to the two baselines.

\begin{table}[t]
\caption{{\bf Sensitive Place Detection on Life-logging Dataset}: average precision for detecting images of sensitive places.}
\centering
\scalebox{.9}{
\begin{tabular}{cccc}
\toprule
    Methods & PRR10~\cite{Pathak2010a} & RA12~\cite{Rajkumar2012a} & {\bf DPHE} \\
     \midrule
     Average Precision & 0.546 & 0.704 & 0.729 \\
\bottomrule
\end{tabular}
}
\label{tab:deepdiary}
\end{table}

\section{Conclusion}
We developed a privacy-preserving framework with a new encryption scheme DPHE for learning visual classifiers securely over distributed private data. Our experiments show that the proposed framework outperformed other privacy-preserving baselines \cite{Pathak2010a,Rajkumar2012a} in terms of accuracy and worked comparably well to several state-of-the-art visual recognition methods.

Although we focused exclusively on the learning of linear classifiers, DPHE can encrypt any type of sparse high-dimensional data efficiently and thus could be used for other tasks. One interesting task is distributed unsupervised or semi-supervised learning which will not require users to annotate all private data. Another promising direction for the future work is learning much higher-dimensional models like sparse convolutional neural networks~\cite{BaoyuanLiu2015,Sun2016a}. Our privacy-preserving framework will make it easy to provide diverse data sources for learning such complex models.

\appendix

\section{Supplementary Material}
\subsection{Some Statistics on Public/Private Images}
We are interested in leveraging `private' images, which are not shared publicly but just saved on a personal storage privately, for visual learning. In this section, we would like to provide some statistics that motivate our work.

Based on the recent report from Kleiner Perkins Caufield \& Byers~\cite{Meeker2016}, the number of photos shared publicly on several social networking services (Snapchat, Instagram, WhatsApp, Facebook Messenger, and Facebook) per day has reached almost 3.5 billion in 2015. It also shows that the number of smartphone users in the world was about 2.5 billion in the same year. From these statistics, if everyone takes three photos per day on average, about four billion photos in total would be stored privately everyday. Some prior work~\cite{Hoyle2015,Hoyle2014} has shown that such private photos still contained meaningful information including people, faces, and written texts, as well as some sensitive information like a computer screen and a bedroom. Our privacy-preserving framework is designed to learn visual classifiers by leveraging this vast amount of private images while preserving the privacy of the owners.

\subsection{Examples of Privacy Leakage from Locally-Updated Classifiers}

In our framework, users update classifier weights $\barw_t\in\mathbb{R}^D$ locally using their own private data and send the updated ones $\wui_t\in\mathbb{R}^D$ to the aggregator. Here we discuss how the combination of $\barw_t$ and $\wui_t$ can be used to reveal a part of the trained data.

Let us denote a labeled sample by $z_i=(\vct{x}_i, y_i)$ where $\vct{x}_i\in\mathbb{R}^D$ is a feature vector and $y_i\in\{-1, 1\}$ is a binary label. The whole data privately owned by a single user is then described by $\mathcal{Z}=\{z_i\mid i=1,\dots,K\}$. In order to learn a classifier, we minimize a regularized loss function which is defined with weights $\vct{w}$ and data $\mathcal{Z}$ as follows:
\begin{equation}
Q(\mathcal{Z}, \vct{w})=\ell (\mathcal{Z},\vct{w}) + \lambda R(\vct{w}),
\end{equation}
where $\ell (\mathcal{Z}, \vct{w})$ is a loss function, $R(\vct{w})$ is a certain regularizer, and $\lambda$ is a regularization strength. 

\subsubsection{Stochastic Gradient Descent}
As described in Section 2.1 of the original paper, a part of trained private data could be leaked when users update a classifier via stochastic gradient descent (SGD). With SGD, users obtain weights $\wui_t$ by updating $\barw_t$ based on the gradient of regularized loss with respect to single sample $z_t=(\vct{x}_t, y_t)\in \mathcal{Z}$ picked randomly from $\mathcal{Z}$~\cite{Bottou2012}:
\begin{equation}
    \wui_t = \barw_t -\gamma_t \nabla_{\barw_t}Q(z_t, \barw_t),
    \label{eq:sgd}
\end{equation}
where $\gamma_t$ is a learning rate at time step $t$ and controls how much one can learn from the sample $z_t$. Loss gradient $\nabla_{\barw_t}Q(z_t, \barw_t)$ is described as follows: 
\begin{equation}
    \nabla_{\barw_t}Q(z_t, \barw_t) = \nabla_{\barw_t}\ell (z_t, \barw_t) + \lambda\nabla_{\barw_t} R(\barw_t).
    \label{eq:gradient}
\end{equation}
By plugging Equation~(\ref{eq:gradient}) into Equation~(\ref{eq:sgd}), we obtain:
\begin{equation}
\nabla_{\barw_t}\ell(z_t, \barw_t) = \frac{\barw_t - \wui_{t+1}}{\gamma_t} - \lambda \nabla_{\barw_t}R(\barw_t).
\label{eq:get_grad}
\end{equation}

Now we are interested in what one can know about $z_t = (\barw_t, y_t)$ from Equation~(\ref{eq:get_grad}) \emph{where both $\barw_t$ and $\wui_{t+1}$ are given}. If the type of regularizer $R(\cdot)$ can be identified (\eg, L2 regularization), $\nabla_{\barw_t}R(\barw_t)$ can be computed exactly. In addition, if concrete parameters for $\gamma_t$ and $\lambda$ can be guessed (\eg, when using a default parameter of open source libraries) and if a specific loss function is used for $\ell(z_t, \barw_t)$, one can narrow down the private sample $z_t$ to several candidates.

Specifically, let $\Theta$ be the RHS of Equation~(\ref{eq:get_grad}), which is given when $R(\cdot)$ is identified and $\gamma_t, \lambda$ is estimated. Then, when using some specific loss functions, we can solve $\nabla_{\barw_t}\ell(z_t, \barw_t)=\Theta$ for $z_t$ as follows: 
\begin{description}
\item[Hinge loss] $\nabla_{\barw_t}\ell(z_t, \barw_t) = \vct{0}_D$ (an all-zero vector of size $D$) if $1-y_t\barw_t^\top\vct{x}_t < 0$ or $-y_t\vct{x}_t$ otherwise. If $\Theta$ is a non-zero vector, then $z_t=(\Theta, -1)$ or $(-\Theta, 1)$.
\item[Logistic loss] $\nabla_{\barw_t}\ell(z_t, \barw_t) = \frac{-y_t\vct{x}_t}{1 + \exp(y_t\barw_t^\top\vct{x}_t)}=\frac{-X}{1+\exp(\bar{w}_t^\top X)}$, where $\barw_t$ is known and $X=y_t\vct{x}_t$. $X$ can be obtained numerically (\eg, via the Newton's method), and $z_t=(X, 1)$ or $(-X, -1)$.
\end{description}

Note that this problem of sample leakage can happen also when users have just a single image in their storage. 

\subsubsection{Gradient Descent}
When users have more than one image, it might be natural to use a gradient descent (GD) technique instead of SGD. Namely, we evaluate the loss gradient averaged over the whole data $\nabla_{\barw_t}\ell(\mathcal{Z}, \barw_t)=\frac{1}{K}\sum_i\nabla_{\barw_t}\ell(z_i, \barw_t)$ instead of that of a single sample. Although this averaging can prevent one from identifying individual sample $z_i$, the equation $\nabla_{\barw_t}\ell(\mathcal{Z}, \barw_t)=\Theta$ still reveals some statistics of private data $\mathcal{Z}$. Specifically, if the class balance of data is extremely biased, one can guess an average of samples that were not classified correctly. One typical case of class unbalance arises when learning a detector of abnormal events. This will regard most of training samples as negative ones.

Let us consider an extreme case where all of the samples owned by a single user belong to the negative class, \ie, $y_i=-1\;\forall z_i=(\vct{x}_i, y_i) \in \mathcal{Z}$. If we use the hinge loss, a set of samples that were not classified perfectly is described by $\bar{\mathcal{Z}}=\{z_i=(\vct{x}_i,y_i)\mid y_i\barw_t^\top\vct{x}_i<1\}\subseteq \mathcal{Z}$. Then, the averaged loss gradient is transformed as follows:
\begin{equation}
\nabla_{\barw_t}\ell(\mathcal{Z}, \barw_t)= \frac{1}{K}\sum_{z_i\in\bar{\mathcal{Z}}} -y_i\vct{x}_i=\frac{1}{K}\sum_{z_i\in\bar{\mathcal{Z}}}\vct{x}_i=\Theta.
\end{equation}
Namely, $\Theta$ is proportional to the average of samples that were not classified correctly.

For the logistic loss, when $y_i=-1$, the loss gradient with respect to a single sample becomes $\nabla_{\barw_t}\ell(z_i, \barw_t) = \frac{\vct{x}_i}{1 + \exp(-\barw_t^\top\vct{x}_i)}=P(y_i=1\mid \vct{x}_i)\vct{x}_i$, where $P(y_i=1\mid\vct{x}_i)$ is close to $0$ when $z_i$ is classified correctly as negative, and increases up to $1$ when classified incorrectly as positive. Then, the averaged loss gradient becomes:
\begin{equation}
\nabla_{\barw_t}\ell(\mathcal{Z}, \barw_t)=\frac{1}{K}\sum_{i}P(y_i=1\mid\vct{x}_i)\vct{x}_i=\Theta.
\end{equation}
If all the samples are classified confidently, \ie, $|\barw_t^\top\vct{x}_i| \gg 0 \; \forall z_i\in\mathcal{Z}$, $\Theta$ is again proportional to the average of samples not classified correctly.

\subsubsection{Reconstructing Images from Features}
The previous sections demonstrated that classifiers updated locally via SGD/GD could expose a part of trained feature vectors. We argue that users will further suffer from a higher privacy risk when the features could be inverted to original images~(\eg, \cite{Dosovitskiy2016a,Mahendran2015,Vondrick2013a}).  Figure~\ref{fig:recon} showed examples on image reconstruction from features using \cite{Dosovitskiy2016a} on some images from Caltech101~\cite{Fei-Fei2007a}. We extracted outputs of the fc6 layer of the Caffe reference network used in the open source library\footnote{\url{http://lmb.informatik.uni-freiburg.de/resources/software.php}}. Although reconstructed images do not currently describe the fine-details of original images (\eg, contents displayed on the laptop), they could still capture the whole picture indicating what were recorded or where they were recorded.

\begin{figure}[t]
\centering
\includegraphics[width=\linewidth]{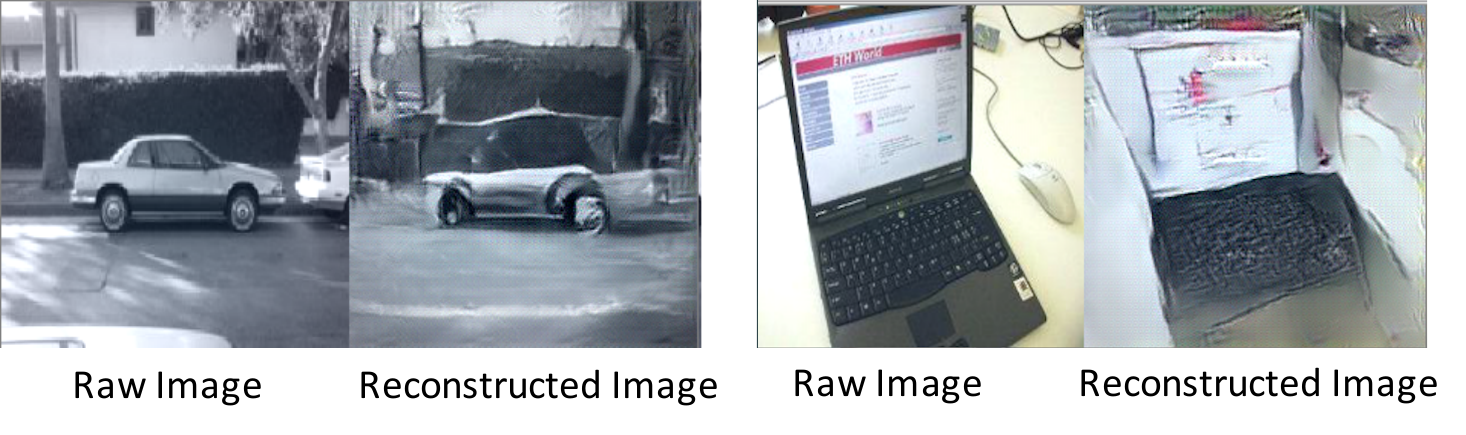}
\caption{{\bf Image Reconstruction from Features} with~\cite{Dosovitskiy2016a}}
\label{fig:recon}
\end{figure}

\subsection{Additional Experimental Results}
In the original paper, we evaluated our approach on a variety of tasks not only object classification on the classic Caltech Datasets~\cite{Fei-Fei2007a,Griffin2007a} but also face attribute recognition and sensitive place detection on a large-scale dataset~\cite{Fan2016a,Liu2015a}. This section introduces some additional experimental results using different tasks or datasets.

\paragraph{Video Attribute Recognition on YouTube8M Subset}
We evaluated the proposed method ({\bf DPHE}) as well as the two privacy-preserving baselines ({\bf PPR10}~\cite{Pathak2010a}, {\bf RA12}~\cite{Rajkumar2012a}) on a video attribute recognition task using a part of YouTube8M dataset~\cite{Sami2016}. Specifically, we picked 227,476 videos from the training set and 79,398 videos from the validation set. Similar to the data preparation in Section 3.2 of the original paper, 50,000 videos of our training set were left for the initialization data and the rest was split into five to serve as private data with $N=5$. Although over 4,000 attributes like `Games,' `Vehicle,' and `Pina Records.' were originally annotated to each video, our evaluation used the top 100 frequent attributes that were annotated to more than 1,000 videos in our training set. For each video, we extracted outputs of the global average pooling layer of the deep residual network~\cite{He2016a} trained on ImageNet~\cite{Russakovsky2015a} every 30 frame (about once in a second) and average them to get a single 2048-dimensional feature vector. Table~\ref{tab:yt8m} describes the mean average precision (mAP) for the top 10, 50, and 100 frequently-annotated attributes. We confirmed that DPHE outperformed the two privacy-preserving baselines. The sparsity of locally-updated classifiers was 65\% on average, which resulted in about 3.5 minutes for the encryption.  In order to see the original performance obtained by using residual network features, we introduced another baseline ({\bf No-PP} in the table) that learned an L2-regularized linear SVM on the whole training data via SGD. The results demonstrated that the performance with DPHE was almost comparable to that with No-PP.

\begin{table}[t]
\caption{{\bf Video Attribute Recognition Results on the Part of YouTube8M Dataset}: mean average precision (mAP) for the top 10, 50, and 100 frequently-annotated attributes.}
\centering
\scalebox{.9}{
\begin{tabular}{ccccc}
\toprule
    Methods & mAP ($\sim$10) & ($\sim$50) & ($\sim$100) & Privacy \\
     \midrule
     PRR10~\cite{Pathak2010a}& 0.62 & 0.45 & 0.38 & \cmark\\
     RA12~\cite{Rajkumar2012a}& 0.60 & 0.45 & 0.37 & \cmark\\
     No-PP & 0.70 & 0.55 & 0.48 & \xmark\\
     \midrule
    {\bf DPHE}& 0.70 & 0.53 & 0.47 & \cmark\\
\bottomrule
\end{tabular}
}
\label{tab:yt8m}
\end{table}

\paragraph{Clustering on Caltech101/256}
\label{subsec:exp_cluster}
Unlike the other privacy-preserving baselines~\cite{Pathak2010a,Rajkumar2012a}, DPHE can also be applied to an unsupervised clustering task based on mini-batch k-means~\cite{Sculley2010a}. Instead of learning a classifier with the initialization data, an aggregator first runs k-means++~\cite{Arthur2007a} to distribute cluster centroids to users. Users then update the centroids locally with sparse constraints. We used the L1-regularized stochastic k-means~\cite{Jumutc2015a} to obtain the sparse centroids. Regularization strength ($\eta$ in \cite{Jumutc2015a}) was chosen adaptively so that the sparsity of cluster centroids was more than 90\% on average. Table~\ref{tab:clustering} shows an adjusted mutual information score of the clustering task on Caltech101~\cite{Fei-Fei2007a} and Caltech256~\cite{Griffin2007a}. As a baseline method, we chose a standard mini-batch k-means (S10~\cite{Sculley2010a}). For all of the methods, the number of image categories was given as the number of clusters, \ie, we assumed that the correct cluster number was known. We found that DPHE achieved a comparable performance to the baseline method.

\begin{table}[t]
\caption{{\bf Clustering Results on Caltech101/256}: adjusted mutual information scores given the correct number of clusters.}
\centering
\scalebox{.9}{
\begin{tabular}{cccc}
\toprule
     Methods &  Caltech101& Caltech256 & Privacy\\ 
     \midrule
     S10~\cite{Sculley2010a} & 0.733 & 0.630 &\xmark \\
     \midrule
     {\bf DPHE}&0.753&0.614 &\cmark \\
\bottomrule
\end{tabular}
}
\label{tab:clustering}
\end{table}

\subsection{Security Evaluation}
Finally, we introduce a formal version of our security evaluation on Algorithm 1 that supplements Section 2.5 in the original paper. Recall that our framework involves the following three types of parties:
\begin{definition}[Types of parties]
Let $U\ui{1},\dots,U\ui{N}$ be $N$ users, $A$ be an aggregator, and $G$ be a key generator. We assume that they are all semi-honest~\cite{OdedGoldreich2004} and do not collude. $U\ui{n}$ has private data $\wui\in\mathbb{R}^D$. $U\ui{n}$ can communicate only with $A$ and $G$, while $A$ and $G$ can communicate with all of the parties. $U\ui{j}\; (j\neq n)$ may be malicious and intercept data sent from $U\ui{n}$ to $A$.
\end{definition}

With DPHE, private data $\wui$ is first decomposed into $\wui=K\ui{n}\vct{v}\ui{n}$, where $\vct{v}\ui{n}\in\mathbb{R}^M$, $K\ui{n}\in\{0, 1\}^{D, M}$, and $M$ is an encryption capacity indicating the maximum number of values that are Paillier encrypted. $K\ui{n}$ is called an index matrix and defined as follows:
\begin{definition}[Index matrix]
An index matrix of $\vct{w}\in\mathbb{R}^D$ given $\vct{v}\in\mathbb{R}^M$ is a binary matrix $K\in\{0, 1\}^{D\times M}$ such that the number of ones for each column is exact one and that for each row is at most one and $\vct{w}=K\vct{v}$. Let $\mathcal{K}_{D, M}$ be a set of all possible index matrices of size $D\times M$. 
\label{def:ind}
\end{definition}
To make $\vct{v}\ui{n}$ secure, we use the Paillier encryption~\cite{Paillier1999a}. On the encrypted data $\he{\vct{v}\ui{n}}$, the following lemma holds based on Theorem 14 and Theorem 15 in \cite{Paillier1999a}:
\begin{lemma}[Paillier Encryption~\cite{Paillier1999a}]
Let $\he{\vct{v}}$ be a vector which is obtained by encrypting $\vct{v}$ with the Paillier cryptosystem. Then, no one can identify $\vct{v}$ from $\he{\vct{v}}$ without a decryption key.
\label{le:he}
\end{lemma}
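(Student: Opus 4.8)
The plan is to derive this lemma directly from the established security of the Paillier cryptosystem, reducing the vector statement to the scalar one and then invoking Theorems 14 and 15 of \cite{Paillier1999a}. Before doing so, I would make explicit that ``identify'' is to be read in the \emph{computational} sense: under the semi-honest, computationally bounded adversary model introduced above, recovering $\vct{v}$ is infeasible for any probabilistic polynomial-time party, rather than information-theoretically impossible. This framing is essential because Paillier security rests on the Computational and Decisional Composite Residuosity assumptions, so the guarantee is conditional on the hardness of those problems.

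First I would reduce the vector claim to a single coordinate. Recall that $\he{\vct{v}}$ is formed by encrypting each entry $v$ of $\vct{v}$ independently as $\he{v}=g^{v}r^{T}\bmod T^{2}$, with fresh randomness $r$ drawn per coordinate. Hence it suffices to show that one ciphertext $\he{v}$ leaks no efficiently-extractable information about its plaintext $v$; a standard hybrid argument, exploiting the independence of the per-coordinate randomness, then lifts scalar security to the whole vector, so that any adversary able to identify $\vct{v}$ would already be able to recover some individual $v$.

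For the scalar case I would appeal to the two cited results. Theorem 14 of \cite{Paillier1999a} establishes one-wayness: inverting $\he{v}=g^{v}r^{T}\bmod T^{2}$ to obtain $v$ without the decryption key is equivalent to solving the Computational Composite Residuosity problem, which is assumed intractable; thus no efficient party recovers $v$ outright. Theorem 15 strengthens this to semantic security under the Decisional Composite Residuosity assumption, which additionally rules out the leakage of any partial predicate of $v$ and, because a fresh $r$ randomizes every encryption, guarantees that repeated ciphertexts of the same value are indistinguishable---so even a party observing many encryptions (as occurs across the iterative protocol) gains nothing.

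The main obstacle is not the algebra but the correct interpretation of the statement. The honest difficulty is bridging the informal phrasing ``no one can identify'' with the computational guarantees actually supplied by \cite{Paillier1999a}: I would therefore state the conclusion as holding against all probabilistic polynomial-time adversaries, conditioned on the composite residuosity assumptions and consistent with the semi-honest model, and treat the hybrid/union bound over the $M$ coordinates as the remaining routine step.
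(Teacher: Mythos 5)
Your proposal matches the paper's treatment: the paper gives no independent proof, simply importing the lemma from Theorems 14 and 15 of \cite{Paillier1999a} (one-wayness and semantic security), exactly the two results you invoke. Your added care---reading ``identify'' computationally under the composite residuosity assumptions and lifting the scalar guarantee to the vector via per-coordinate fresh randomness---is a routine but correct elaboration of the same argument, not a different route.
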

Note that the Paillier encryption of $\vct{v}\ui{n}$ also helps to keep secret the number of non-zeros in $\wui$ since $M$ is always greater than or equal to the non-zero number.

On the other hand, DPHE doubly-permutes $K\ui{n}$, namely $\dperm{K\ui{n}}=\perm\ui{n}\perm K\ui{n}$, where $\perm\ui{n}, \perm \in\{0,1\}^{D\times D}$ is a permutation matrix defined as follows:
\begin{definition}[Permutation matrix]
A permutation matrix that permutes $D$ elements is a square binary matrix $\perm\in\{0, 1\}^{D\times D}$ such that the number of one for each column and for each row is exact one. Let $\varOmega_D$ be a set of all possible permutation matrices of size $D$. 
\label{def:perm}
\end{definition}
In DPHE, $U\ui{n}$ has both $\perm$ and $\perm\ui{n}$ but does not have $\{\perm\ui{j}\mid j\neq n\}$, while $A$ has $\{\perm\ui{n}\mid n=1,\dots,N\}$ but does not have $\perm$. In what follows we prove that without having both of $\perm$ and $\perm\ui{n}$, one cannot identify $K\ui{n}$ from $\dperm{K\ui{n}}$ (\ie, only $U\ui{n}$ can identify $K\ui{n}$). As preliminaries, we introduce several properties of index and permutation matrices based on Definition~\ref{def:ind} and Definition~\ref{def:perm}.
\begin{corollary}[Properties of index matrices]
For $\perm\in\varOmega_D$ and $K\in\mathcal{K}_{D, M}$, $\perm K \in \mathcal{K}_{D, M}$. 
\label{col:ind}
\end{corollary}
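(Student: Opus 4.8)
The plan is to exploit the elementary fact that left-multiplication by a permutation matrix merely permutes the rows of a matrix, and then to check that each of the three defining properties in Definition~\ref{def:ind} is invariant under such a permutation. First I would attach to $\perm\in\varOmega_D$ (Definition~\ref{def:perm}) its associated bijection $\sigma:\{1,\dots,D\}\to\{1,\dots,D\}$, fixing the convention $\perm_{ik}=1$ iff $\sigma(k)=i$. Expanding the product entrywise gives $(\perm K)_{ij}=\sum_{k=1}^{D}\perm_{ik}K_{kj}$, and since exactly one index $k$ (namely $k=\sigma^{-1}(i)$) makes $\perm_{ik}=1$ while the rest vanish, this collapses to $(\perm K)_{ij}=K_{\sigma^{-1}(i),j}$. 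Thus the $i$-th row of $\perm K$ is simply the $\sigma^{-1}(i)$-th row of $K$, so $\perm K$ is obtained from $K$ by a bijective relabeling of its rows; in particular $\perm K$ has size $D\times M$ and is binary, as every one of its entries is an entry of $K$.

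Next I would verify the two counting conditions. For the column condition, the $j$-th column sum of $\perm K$ is $\sum_{i}K_{\sigma^{-1}(i),j}=\sum_{k}K_{kj}$, where the substitution $k=\sigma^{-1}(i)$ is legitimate because $\sigma^{-1}$ is a bijection of $\{1,\dots,D\}$; by Definition~\ref{def:ind} this equals $1$, so every column of $\perm K$ contains exactly one $1$. For the row condition, the $i$-th row sum of $\perm K$ equals $\sum_{j}K_{\sigma^{-1}(i),j}$, which is the row sum of row $\sigma^{-1}(i)$ of $K$ and is therefore at most $1$. Since being binary, $D\times M$, column-sum one, and row-sum at most one are exactly the defining properties of $\mathcal{K}_{D,M}$, I conclude $\perm K\in\mathcal{K}_{D,M}$.

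There is no genuine obstacle here: the statement is a routine closure verification, and the only care required is bookkeeping—choosing a consistent convention for the permutation $\sigma$ induced by $\perm$ and tracking the reindexing so that column sums are preserved under the bijection while the row-sum bound $\le 1$ is maintained. I would present the whole argument in the ``permutation of rows'' language rather than wrestling with summation indices, which makes each invariance transparent. This corollary is precisely the closure property needed downstream to guarantee that the permuted index matrix $\perm K\ui{n}$, and by one further application the doubly-permuted $\perm\ui{n}\perm K\ui{n}$, is again a valid index matrix so that the decomposition in Equation~(\ref{eq:decomposition}) remains well defined after shuffling.
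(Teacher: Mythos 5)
Your proof is correct: the paper states this corollary without an explicit proof, treating it as immediate from Definitions~\ref{def:ind} and \ref{def:perm}, and your row-permutation verification (left-multiplication by $\perm$ bijectively relabels rows, preserving binariness, size, column sums exactly one, and row sums at most one) is precisely the routine check the paper implicitly relies on. No gap — membership in $\mathcal{K}_{D,M}$ is purely the structural counting condition, which is exactly what you verified.
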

\begin{corollary}[Properties of permutation matrices]
For $\perm, \perm'\in \varOmega_D$, $\perm^{-1}=\perm^\top \in \varOmega_D$, and $\perm\perm'\in\varOmega_D$. 
\label{col:perm}
\end{corollary}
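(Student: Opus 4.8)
The plan is to prove all three assertions of Corollary~\ref{col:perm} directly from Definition~\ref{def:perm}, working at the level of matrix entries and exploiting the ``exactly one 1 per row and per column'' structure. First I would record the equivalent functional description of membership in $\varOmega_D$: a matrix $\perm\in\{0,1\}^{D\times D}$ lies in $\varOmega_D$ if and only if there is a bijection $\sigma$ of $\{1,\dots,D\}$ with $\perm e_j = e_{\sigma(j)}$ for every standard basis vector $e_j$. Indeed, the ``one 1 per column'' condition lets me define $\sigma(j)$ as the row index of the unique 1 in column $j$ (well-definedness), and the ``one 1 per row'' condition forces $\sigma$ to be injective, hence bijective on a finite set. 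This reformulation makes the two closure statements almost mechanical.

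For the transpose, I would observe that Definition~\ref{def:perm} is symmetric in rows and columns, so transposing $\perm$ merely interchanges the two counting conditions and therefore leaves ``one 1 per row and per column'' intact; hence $\perm^\top\in\varOmega_D$. To obtain $\perm^{-1}=\perm^\top$ I would compute the $(i,j)$ entry of $\perm\perm^\top$, namely $\sum_k \perm_{ik}\perm_{jk}$. On the diagonal this equals $\sum_k \perm_{ik}^2=\sum_k\perm_{ik}=1$, since row $i$ contains exactly one 1 and the entries are $0/1$. Off the diagonal ($i\neq j$) the sum vanishes: rows $i$ and $j$ each carry their single 1 in some column, and the column-uniqueness condition forbids those two columns from coinciding, so no index $k$ makes both factors nonzero. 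Thus $\perm\perm^\top=I$, and since $\perm$ is square a one-sided inverse is two-sided, giving $\perm^{-1}=\perm^\top$. For the product I would argue column by column: the $j$-th column of $\perm'$ is $e_{\sigma'(j)}$, whence $\perm\perm' e_j=\perm e_{\sigma'(j)}=e_{\sigma(\sigma'(j))}$ is again a standard basis vector. Because $\sigma\circ\sigma'$ is a composition of bijections it is itself a bijection, so $\perm\perm'$ sends each $e_j$ to a distinct $e_{\sigma(\sigma'(j))}$, i.e.\ it has exactly one 1 per column and, by injectivity of $\sigma\circ\sigma'$, exactly one 1 per row. Hence $\perm\perm'\in\varOmega_D$.

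I do not expect a genuine obstacle here, as this is a standard linear-algebra fact; the only thing to watch is the bookkeeping. Specifically, I must make sure the vanishing of the off-diagonal entries of $\perm\perm^\top$ is justified by the \emph{column}-uniqueness hypothesis rather than slipping into row-uniqueness, and that the product argument explicitly invokes both conditions so that ``one 1 per row'' in $\perm\perm'$ is derived from bijectivity of $\sigma\circ\sigma'$ rather than silently assumed. The companion fact Corollary~\ref{col:ind}, which I may assume, follows in the same spirit, since left-multiplication by $\perm$ only reindexes the rows of an index matrix through a bijection and therefore preserves the counting conditions of Definition~\ref{def:ind}.
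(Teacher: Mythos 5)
Your proof is correct: the paper states Corollary~\ref{col:perm} without any proof, treating it as immediate from Definition~\ref{def:perm}, and your entrywise computation of $\perm\perm^\top$ together with the bijection description of $\varOmega_D$ is precisely the routine verification the paper takes for granted (and later relies on in the proof of Lemma~\ref{le:getK}). Your care in attributing the vanishing off-diagonal entries to the column-uniqueness condition, and the one-per-row count of $\perm\perm'$ to bijectivity of $\sigma\circ\sigma'$, is exactly right and leaves no gap.
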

Then, the following lemma about $\dperm{K\ui{n}}$ holds:
\begin{lemma}[Reordering $\dperm{K\ui{n}}$]
Let $\dperm{K\ui{n}}=\perm\ui{n}\perm K\ui{n}\in\mathcal{K}_{D, M}$ where $\perm\ui{n}, \perm \in \varOmega_D$ and $K\ui{n}\in\mathcal{K}_{D, M}$. Suppose that $\dperm{K\ui{n}}$ is known and $K\ui{n}$ is unknown. Then, $K\ui{n}$ can be determined uniquely if and only if both $\perm$ and $\perm\ui{n}$ are known.
\label{le:getK}
\end{lemma}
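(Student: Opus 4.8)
The plan is to prove the two implications separately, treating the ``if'' direction (sufficiency) as a direct inversion and the ``only if'' direction (necessity) by contraposition, reducing the latter to a single combinatorial fact about the row-permutation orbits of index matrices. Throughout I work in the nondegenerate regime $1\le M\le D$ with $D\ge 2$, which is the only case of interest for high-dimensional classifiers.

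For sufficiency I would assume both $\perm$ and $\perm\ui{n}$ are known and simply undo the construction. By Corollary~\ref{col:perm} each permutation matrix satisfies $(\perm\ui{n})^{-1}=(\perm\ui{n})^\top$ and $\perm^{-1}=\perm^\top$, so from $\dperm{K\ui{n}}=\perm\ui{n}\perm K\ui{n}$ one recovers
\begin{equation}
K\ui{n}=\perm^\top(\perm\ui{n})^\top\dperm{K\ui{n}},
\end{equation}
which is a single well-defined matrix. Hence knowledge of both permutations determines $K\ui{n}$ uniquely.

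For necessity I would argue the contrapositive: if at least one of $\perm,\perm\ui{n}$ is unknown, then $K\ui{n}$ is not uniquely determined. The crux is the following claim, which I would establish first: \emph{for any $L\in\mathcal{K}_{D,M}$ with $D\ge 2$, the set of row-permuted matrices $\{\perm'L\mid\perm'\in\varOmega_D\}$ contains at least two distinct elements.} By Definition~\ref{def:ind}, $L$ has exactly $M$ rows carrying a single one and $D-M$ all-zero rows, and since $D\ge 2$ it cannot have all rows identical (a common zero row would leave every column without its required one, while a common nonzero row would force some column to carry $D\ge 2$ ones). Thus $L$ has two distinct rows, and the permutation $\perm'$ swapping them gives $\perm'L\neq L$, with both matrices lying in $\mathcal{K}_{D,M}$ by Corollary~\ref{col:ind}. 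I would then apply this claim to each deficient case. When $\perm$ is unknown (the aggregator, who may hold $\perm\ui{n}$), set $L=(\perm\ui{n})^\top\dperm{K\ui{n}}=\perm K\ui{n}\in\mathcal{K}_{D,M}$; each guess $\perm'$ yields a valid candidate $(\perm')^\top L\in\mathcal{K}_{D,M}$, and the claim supplies at least two distinct such candidates, so $K\ui{n}$ is ambiguous. The symmetric case where $\perm\ui{n}$ is unknown, and the case where neither is known, follow identically by writing the observation as a single unknown permutation applied to an index matrix and invoking the same orbit-size argument.

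The main obstacle I anticipate is the necessity direction, and in particular making the ambiguity argument airtight: I must verify that an unknown permutation produces at least two \emph{valid} index matrices (not merely two matrices) consistent with the observation, which rests on the structural characterization of $\mathcal{K}_{D,M}$ in Definition~\ref{def:ind} together with the closure property of Corollary~\ref{col:ind}. Once the orbit-size claim is secured, the sufficiency direction and the bookkeeping across the three deficient cases are routine.
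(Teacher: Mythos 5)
Your proposal is correct and follows essentially the same route as the paper's proof: direct inversion $K\ui{n}=\perm^\top(\perm\ui{n})^\top\dperm{K\ui{n}}$ for sufficiency, and the contrapositive for necessity, writing the unknown as a single composite permutation $\perm'$ and invoking the closure properties of $\varOmega_D$ and $\mathcal{K}_{D,M}$. Your one addition---the explicit orbit-size claim that $\{\perm' L \mid \perm'\in\varOmega_D\}$ contains at least two distinct elements for $D\ge 2$, proved by exhibiting two distinct rows and swapping them---actually tightens a step the paper leaves implicit: the paper concludes non-uniqueness merely from the fact that every $\perm'\dperm{K\ui{n}}$ is a valid index matrix, without verifying that distinct choices of $\perm'$ yield at least two distinct candidates, so your row-swap argument closes that small gap rather than taking a different route.
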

\begin{proof}
If both of $\perm$ and $\perm\ui{n}$ are known, $K\ui{n}$ can be determined uniquely as $K\ui{n}=\perm^\top(\perm\ui{n})^\top \dperm{K\ui{n}}$ where the variables in the RHS are all known. To prove the `only-if' proposition, we introduce its contrapositive: `if at least one of $\perm$ and $\perm\ui{n}$ is unknown, $K\ui{n}$ cannot be determined uniquely.' Let $\perm'=\perm^\top(\perm\ui{n})^\top\in\varOmega_D$ (which is also a permutation matrix as shown in Corollary~\ref{col:perm}) which is unknown when at least one of $\perm$ and $\perm\ui{n}$ is unknown. For arbitrary $\perm'$, $K\ui{n}=\perm' \dperm{K\ui{n}}$ is always an index matrix as shown in Corollary~\ref{col:ind}. Therefore, $K\ui{n}$ cannot be determined uniquely as long as $\perm'$ is not fixed. This means that the contrapositive is true, and Lemma~\ref{le:getK} is proved. 
\end{proof}

The combination of Lemma~\ref{le:he} and Lemma~\ref{le:getK} proves that the aggregator $A$ and user $U\ui{j}\;(j\neq n)$ cannot identify $U\ui{n}$'s private data $\wui$ and its non-zero indices from encrypted data $\he{\vct{v}\ui{n}}, \dperm{K\ui{n}}$. The remaining concern is if one can identify $\wui$ or its non-zero indices from $\bar{\vct{w}} = \frac{1}{N}\sum_n \wui$. As shown in the original paper, when $N\geq 3$, it is impossible for any party to decompose $\sum_n \wui$ into individual $\wui$'s or to decompose non-zero indices of $\bar{\vct{w}}$ into those of individual $\wui$'s, as long as all parties are semi-honest and do not collude to share private information outside the algorithm.

To conclude, the following theorem is proved:
\begin{theorem}[Security on Algorithm 1]
After running Algorithm 1 by semi-honest and non-colluding parties $U\ui{1},\dots,U\ui{N}$, $A$, and $G$ where $N\geq 3$, no one but $U\ui{n}$ can identify private data $\wui$ and its non-zero indices from obtained information.
\end{theorem}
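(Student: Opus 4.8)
The plan is to prove the theorem by a case analysis over the three party types, cataloguing precisely what each obtains during and after Algorithm~\ref{protocol1} and then showing that no such view determines $\wui$ or its support. Under the semi-honesty and non-collusion assumptions, the views are: the aggregator $A$ holds $\he{\vct{v}\ui{n}}$, $\dperm{K\ui{n}}$, the U-A matrices $\{\perm\ui{j}\}$, and eventually $\vct{w}\sub{sum}$; a malicious user $U\ui{j}$ with $j\neq n$ may intercept $\he{\vct{v}\ui{n}}$ and $\dperm{K\ui{n}}$ and additionally holds the shared matrix $\perm$; the key generator $G$ holds every key and every permutation matrix but, by its protocol role, only ever operates on the aggregate $\perm\he{\vct{w}\sub{sum}}$, never on a per-user payload.

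First I would dispose of the per-user payload $(\he{\vct{v}\ui{n}}, \dperm{K\ui{n}})$ by combining Lemma~\ref{le:he} and Lemma~\ref{le:getK}. Lemma~\ref{le:he} hides the non-zero values $\vct{v}\ui{n}$ from anyone lacking the decryption key, which excludes both $A$ and every $U\ui{j}$. Lemma~\ref{le:getK} says $K\ui{n}$ is recoverable from $\dperm{K\ui{n}}$ exactly when both $\perm$ and $\perm\ui{n}$ are known; but $A$ is missing $\perm$ and $U\ui{j}$ is missing $\perm\ui{n}$, so neither can invert the double permutation. The key generator is excluded for a different reason: although it owns all keys and permutations, semi-honesty and non-collusion ensure it never receives $\he{\vct{v}\ui{n}}$ or $\dperm{K\ui{n}}$ and is asked only to decrypt and reorder the aggregate; this is precisely where the semi-honesty hypothesis does the work.

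The remaining and crucial step concerns the public output $\barw=\frac{1}{N}\sum_n\wui$. Here I would argue non-identifiability by underdetermination. Possession of $\sum_n\wui$ is a single vector constraint on $N$ unknown summands: the aggregator and key generator face all $N\geq 3$ of them, while a user can at best subtract its own contribution to be left with $\sum_{j\neq n}\vct{w}\ui{j}$, still a sum of at least two unknowns. In either case infinitely many decompositions are consistent with the observed sum, so no single $\wui$ is pinned down; symmetrically, the support of $\barw$ is only the union of the supports of the summands, and a union of at least two sets does not determine its constituents, so the support of $K\ui{n}$ is also not revealed.

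I expect this last step to be the main obstacle, since it is the only place the bound $N\geq 3$ is genuinely used and it requires care about the meaning of ``identify'': the claim is not secrecy of the sum but non-identifiability of any individual summand. The clean way to make it rigorous is to exhibit, for any hypothesized value of $\wui$, an alternative assignment of the remaining summands that reproduces both the observed sum and the observed union of supports. The hypothesis $N\geq 3$ guarantees that after any party removes whatever it legitimately knows at least two free summands remain, which is exactly what keeps such an alternative assignment available and completes the proof.
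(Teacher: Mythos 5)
Your proposal is correct and takes essentially the same route as the paper's own proof: you dispose of the per-user payload $(\he{\vct{v}\ui{n}}, \dperm{K\ui{n}})$ by combining Lemma~\ref{le:he} with Lemma~\ref{le:getK}, exclude the key generator via semi-honesty since it only ever touches the aggregate $\perm\he{\vct{w}\sub{sum}}$, and handle the output $\barw$ by the same underdetermination argument the paper uses (with $N\geq 3$ leaving at least two unknown summands after any party subtracts what it knows, and the support of $\barw$ being only a union of supports). No gaps to flag.
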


\paragraph{Acknowledgements}
This work was supported by JST ACT-I Grant Number JPMJPR16UT and JST CREST Grant Number JPMJCR14E1, Japan.

\balance
{\small
\bibliographystyle{ieee}
\bibliography{mendeley_copy}
}

\end{document}